\theoremstyle{plain}
\newtheorem{theorem}{Theorem}[section]
\theoremstyle{remark}
\newtheorem{remark}{Remark}[section]
\numberwithin{equation}{section}
\numberwithin{figure}{section}
\numberwithin{table}{section}
\title[Approximation capability of two hidden layer networks]{Approximation capability of two hidden layer feedforward neural networks with fixed weights}
\author{Namig J. Guliyev}
\address{Institute of Mathematics and Mechanics, Azerbaijan National Academy of Sciences, 9 B.~Vahabzadeh str., AZ1141, Baku, Azerbaijan.}
\email{njguliyev@gmail.com}
\author{Vugar E. Ismailov}
\address{Institute of Mathematics and Mechanics, Azerbaijan National Academy of Sciences, 9 B.~Vahabzadeh str., AZ1141, Baku, Azerbaijan.}
\email{vugaris@mail.ru}
\subjclass[2010]{41A30, 41A63, 65D15, 68T05, 92B20}
\keywords{multilayer feedforward neural network, hidden layer, sigmoidal function, activation function, weight, the Kolmogorov superposition theorem}
\begin{document}
\maketitle
\begin{abstract}
We algorithmically construct a two hidden layer feedforward neural network (TLFN) model with the weights fixed as the unit coordinate vectors of the $d$-dimensional Euclidean space and having $3d+2$ number of hidden neurons in total, which can approximate any continuous $d$-variable function with an arbitrary precision. This result, in particular, shows an advantage of the TLFN model over the single hidden layer feedforward neural network (SLFN) model, since SLFNs with fixed weights do not have the capability of approximating multivariate functions.
\end{abstract}

\section{Introduction} \label{sec:introduction}

The topic of artificial neural networks is an important and vibrant area of research in modern science. This is due to a large number of application areas. Nowadays, neural networks are being successfully applied in areas as diverse as computer science, finance, medicine, geology, engineering, physics, etc. Perhaps the greatest advantage of neural networks is their ability to be used as an arbitrary function approximation mechanism. In this paper, we are interested in questions of density (or approximation with arbitrary accuracy) of the multilayer feedforward neural network (MLFN) model. Approximation capabilities of this model have been well studied for the past 30 years. Choosing various activation functions $\sigma$ it was shown in a great number of papers that MLFNs can approximate any continuous function with an arbitrary precision. The most simple MLFN model is the single hidden layer feedforward neural network (SLFN) model. This model evaluates a multivariate function
\begin{equation} \label{eq:single}
  \sum_{i=1}^{k} c_{i} \sigma(\mathbf{w}^{i} \cdot \mathbf{x} - \theta_{i})
\end{equation}
of the variable $\mathbf{x} = (x_1, \ldots, x_d)$, $d \ge 1$. Here the weights $\mathbf{w}^{i}$ are vectors in $\mathbb{R}^{d}$, the thresholds $\theta_{i}$ and the coefficients $c_{i}$ are real numbers, and the activation function $\sigma$ is a univariate function. A multiple hidden layer network is defined by iterations of the SLFN model. For example, the output of the two hidden layer feedforward neural network (TLFN) model with $k$ units in the first layer, $m$ units in the second layer and the input $\mathbf{x} = (x_1, \ldots, x_d)$ is
\begin{equation*}
  \sum_{i=1}^{m} e_{i} \sigma \left( \sum_{j=1}^{k} c_{ij} \sigma(\mathbf{w}^{ij} \cdot \mathbf{x} - \theta_{ij}) - \zeta_{i} \right).
\end{equation*}
Here $d_{i}$, $c_{ij}$, $\theta_{ij}$ and $\gamma_{i}$ are real numbers, $\mathbf{w}^{ij}$ are vectors of $\mathbb{R}^{d}$, and $\sigma $ is a fixed univariate function.

In many applications, it is convenient to take an activation function $\sigma$ as a \emph{sigmoidal function}, which is defined as
\begin{equation*}
  \lim_{t \to -\infty} \sigma(t) = 0 \qquad \text{ and } \qquad \lim_{t \to +\infty} \sigma(t) = 1.
\end{equation*}
The literature on neural networks abounds with the use of such functions and their superpositions.

The possibility of approximating a continuous function on a compact subset of $\mathbb{R}^{d}$, $d \ge 1$, by SLFNs with a sigmoidal activation function has been tremendously studied in many papers. To the best of our knowledge, Gallant and White~\cite{GW88} were the first to prove the universal approximation property for the SLFN model with a sigmoidal activation function. Their activation function, called the \emph{cosine squasher}, has the ability to generate any trigonometric series. As such, this function has the density property. Carroll and Dickinson~\cite{CD89} implemented the inverse Radon transformation to approximate $L^{2}$ functions, using any continuous sigmoidal function as an activation function. Cybenko~\cite{C89} proved that SLFNs with a continuous sigmoidal activation function can approximate any continuous function with arbitrary accuracy on compact subsets of $\mathbb{R}^{d}$. Funahashi~\cite{F89}, independently of Cybenko, proved the density property for a continuous monotone sigmoidal function. Hornik, Stinchcombe and White~\cite{HSW89} proved density of SLFNs with a discontinuous bounded sigmoidal function. K\r{u}rkov\'{a}~\cite{K92} showed that staircase-like functions of any sigmoidal type has the capability of approximating continuous univariate functions on any compact subset of $\mathbb{R}$ within arbitrarily small tolerance. This result was substantially used in K\r{u}rkov\'{a}'s further results, which showed that a continuous multivariate function can be approximated arbitrarily well by TLFNs with a sigmoidal activation function (see~\cite{K91, K92}). Chen, Chen and Liu~\cite{CCL92} generalized the result of Cybenko by proving that any continuous function on a compact subset of $\mathbb{R}^{d}$ can be approximated by SLFNs with a bounded (not necessarily continuous) sigmoidal activation function. Almost the same result was independently obtained by Jones~\cite{J90}. Costarelli and Spigler~\cite{CS13} constructed special sums of the form (\ref{eq:single}), using a given function $f\in C[a,b]$. They then proved that these sums approximate $f$ within any degree of accuracy. In their result, similar to \cite{CCL92}, $\sigma$ is any bounded sigmoidal function. Chui and Li~\cite{CL92} proved that SLFNs with a continuous sigmoidal activation function having integer weights and thresholds can approximate continuous univariate functions on any compact subset of the real line.

In a number of subsequent papers, which considered the density problem for the SLFN model, nonsigmoidal activation functions were allowed. Here we cite a few of them. The papers by Stinchcombe and White~\cite{SW90}, Cotter~\cite{C90}, Hornik~\cite{H91}, Mhaskar and Micchelli~\cite{MM92} are among many others. It should be remarked that the more general result in this direction belongs to Leshno, Lin, Pinkus and Schocken~\cite{LLPS93}. They proved that the necessary and sufficient condition for any continuous activation function to have the density property is that it not be a polynomial. For more detailed discussion of the density problem, see the review paper by Pinkus~\cite{P99}.

The above results show that SLFNs with various activation functions enjoy the universal approximation property. In recent years, the theory of neural networks has been developed further in this direction. For example, from the point of view of practical applications, SLFNs with a restricted set of weights have gained a special interest (see, e.g., \cite{D02, I12, I15, IS17, JYJ10, LFN04}). It was proved that SLFNs with some restricted set of weights still possess the universal approximation property. For example, Stinchcombe and White~\cite{SW90} showed that SLFNs with a polygonal, polynomial spline or analytic activation function and a bounded set of weights have the universal approximation property. Ito~\cite{I91, I92} investigated this property of networks using monotone sigmoidal functions, with only weights located on the unit sphere. In \cite{I12, I15, IS17}, the second coauthor considered SLFNs with weights varying on a restricted set of directions, and gave several necessary and sufficient conditions for good approximation by such networks. For a set of weights consisting of two directions, he showed that there is a geometrically explicit solution to the problem. Hahm and Hong \cite{HH04}~went further in this direction, and showed that SLFNs with fixed weights can approximate arbitrarily well any continuous univariate function. Since fixed weights reduce the computational expense and training time, this result is of particular interest. In a mathematical formulation, the result says that for a bounded measurable sigmoidal function $\sigma$, networks of the form $\sum_{i=1}^{k} c_{i} \sigma(\alpha x - \theta_{i})$ are dense in $C[a, b]$. Cao and Xie~\cite{CX10} strengthened this result by specifying the number of hidden neurons to realize $\varepsilon$-approximation to any continuous function. By implementing modulus of continuity, they established Jackson-type upper bound estimations for the approximation error.

Approximation capabilities of SLFNs with fixed weights were also analyzed in Lin, Guo, Cao and Xu~\cite{LGCX13}. Taking the activation function $\sigma$ as a continuous, even and $2\pi$-periodic function, the authors of \cite{LGCX13} showed that neural networks of the form $\sum_{i=1}^{r} c_{i} \sigma(x-x_{i})$ can approximate any continuous function on $[-\pi, \pi]$ with an arbitrary precision $\varepsilon$. Note that all the weights are fixed equal to $1$, and consequently do not depend on $\varepsilon$. To prove this, they first gave an integral representation for trigonometric polynomials, and constructed explicitly a network with the weight $1$ that approximates this integral representation. Finally, the obtained result for trigonometric polynomials was used to prove a Jackson-type upper bound for the approximation error.

Note that SLFNs with a fixed number of weights cannot approximate $d$-variable functions if $d > 1$. That is, if in (\ref{eq:single}) we have $n$ different weights $\mathbf{w}^{i}$ ($n$ is fixed), then there exist a compact set $Q \subset \mathbb{R}^{d}$ and a function $f\in C(Q)$, which cannot be approximated arbitrarily well by the networks formed as (\ref{eq:single}). This follows from a result of Lin and Pinkus on sums of $n$ ridge functions (see \cite[Theorem 5.1]{LP93}). For details, see our recent paper \cite{GI18}. Thus the above results of Hahm and Hong~\cite{HH04}, Cao and Xie~\cite{CX10}, Lin, Guo, Cao and Xu~\cite{LGCX13} cannot be generalized to the $d$-dimensional case if one allows only the SLFN model of neural networks.

It should be remarked that in all of the above-mentioned works the number of neurons $k$ in the hidden layer is not fixed. As such to achieve a desired precision one may take an excessive number of hidden neurons. Unfortunately, practicality decreases with the increase of the number of neurons in the hidden layer. In other words, SLFNs are not always effective if the number of neurons in the hidden layer is prescribed. More precisely, they are effective if and only if we consider univariate functions. In~\cite{GI16}, we consider constructive approximation on any finite interval of $\mathbb{R}$ by SLFNs with a fixed number of hidden neurons. We construct algorithmically a smooth, sigmoidal, almost monotone activation function $\sigma$ providing approximation to an arbitrary univariate continuous function within any degree of accuracy. Note that the result of~\cite{GI16} is not applicable to multivariate functions.

The first crucial step in investigating approximation capabilities of MLFNs with a prescribed number of hidden neurons was made by Maiorov and Pinkus~\cite{MP99}. Their remarkable result revealed that TLFNs with $3d$ units in the first layer and $6d + 3$ units in the second layer can approximate an arbitrary continuous $d$-variable function. Using a different activation function than in~\cite{MP99}, the second coauthor~\cite{I14} showed that the number of neurons in hidden layers can be reduced to $d$ and $2d + 2$ respectively. Note that the results of both papers carry a theoretical character, as they indicate only the existence of the corresponding TLFNs, their activation functions.

We see that in each result above at least one of the following general properties is violated.
\begin{enumerate}
  \item the number of hidden neurons is fixed;
  \item the weights are fixed;
  \item the activation function is computable;
  \item the network has the capability of approximating $d$-variable functions in the case $d > 1$.
\end{enumerate}
In this paper, we construct a special TLFN model that satisfies all of the properties (1)--(4). In addition, we show that along with the number of hidden neurons and weights, it is also possible to fix some dilation coefficients of the constructed activation function.

\section{The main result} \label{sec:result}

In the sequel, we deal with an activation function, which is monotonic in the weak sense. Here by \emph{weak monotonicity} we understand behavior of a function whose difference in absolute value from a monotonic function is a sufficiently small number. In this regard we say that a real function $f$ defined on a set $X \subseteq \mathbb{R}$ is \emph{$\lambda$-increasing} (respectively, \emph{$\lambda $-decreasing}) if there exists an increasing (respectively, decreasing) function $u \colon X \to \mathbb{R}$ such that $|f(x) - u(x)| \le \lambda$ for all $x \in X$. Clearly, $0$-monotonicity coincides with the usual concept of monotonicity and a $\lambda_1$-increasing function is $\lambda_{2}$-increasing if $\lambda_1 \le \lambda_{2}$.

Our main result is the following theorem.
\begin{theorem} \label{thm:main}
Assume a closed interval $[a,b] \subset \mathbb{R}$ is given, $s = b - a$, and $\lambda $ is any sufficiently small positive real number. Then one can algorithmically construct a computable, infinitely differentiable, sigmoidal activation function $\sigma \colon \mathbb{R} \to \mathbb{R}$ which is strictly increasing on $(-\infty, s)$, $\lambda$-strictly increasing on $[s, +\infty)$ and satisfies the following property: For any continuous function $f$ on the $d$-dimensional box $[a,b]^{d}$ and $\varepsilon >0,$ there exist constants $e_p$, $c_{pq}$, $\theta_{pq}$ and $\zeta_p$ such that the inequality
\begin{equation*}
  \left| f(\mathbf{x}) - \sum_{p=1}^{2d+2} e_p \sigma \left( \sum_{q=1}^{d} c_{pq} \sigma(\mathbf{w}^{q} \cdot \mathbf{x} - \theta_{pq}) - \zeta_p \right) \right| < \varepsilon
\end{equation*}
holds for all $\mathbf{x} = (x_1, \ldots, x_d) \in [a, b]^{d}$. Here the weights $\mathbf{w}^{q}$, $q = 1, \ldots, d$, are fixed as follows:
\begin{equation*}
  \mathbf{w}^{1} = (1, 0, \ldots, 0), \quad \mathbf{w}^{2} = (0, 1, \ldots, 0), \quad \ldots, \quad \mathbf{w}^{d} = (0, 0, \ldots, 1).
\end{equation*}
In addition, all the coefficients $e_p$, except one, are equal.
\end{theorem}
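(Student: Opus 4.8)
The plan is to reduce the $d$-variable approximation to two univariate approximation problems by means of the Kolmogorov superposition theorem, and then to solve both of them with a single activation function $\sigma$ constructed so that one scaled and translated copy of it reproduces an arbitrary prescribed univariate function to arbitrary precision. By a suitable form of the Kolmogorov superposition theorem (with a common outer function, after Kolmogorov and Lorentz) there are fixed constants $\lambda_1,\dots,\lambda_d>0$ and fixed continuous increasing functions $\phi_1,\dots,\phi_{2d+1}\colon[a,b]\to[0,1]$, not depending on $f$, such that every $f\in C([a,b]^d)$ has a representation $f(\mathbf x)=\sum_{p=1}^{2d+1}g\bigl(\sum_{q=1}^d\lambda_q\phi_p(x_q)\bigr)$ with $g$ continuous (and depending on $f$), where all the inner sums lie in the fixed compact interval $[0,\Lambda]$, $\Lambda=\sum_q\lambda_q$. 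The first hidden layer will be used to reproduce the inner sums $\sum_q\lambda_q\phi_p(x_q)$, and the outer layer to reproduce the values $g(\cdot)$.

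For the activation function, fix a countable family $\{h_n\}_{n\ge1}$ of polynomials with rational coefficients that is dense in $C[0,1]$, and construct, by an explicit algorithm, a computable $C^\infty$ sigmoidal $\sigma$ that is strictly increasing on $(-\infty,s)$ and that, on a sequence of pairwise disjoint subintervals of $[s,+\infty)$ of length $s$, coincides with affine copies of the $h_n$, the copy of $h_n$ having amplitude $\lambda_n$ with $0<\lambda_n\le\lambda$ and $\lambda_n\to0$. Choosing the levels of these copies and the smooth transitions between them to follow a strictly increasing trend rising to $1$, one gets $\sigma$ within $\lambda$ of a strictly increasing function on $[s,+\infty)$ (so $\sigma$ is $\lambda$-strictly increasing there); since the amplitudes $\lambda_n$ vanish, $\sigma(x)\to1$ as $x\to+\infty$, and together with the strictly increasing sigmoidal part on $(-\infty,s)$ this makes $\sigma$ sigmoidal. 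Using smooth $h_n$ and smooth bump transitions keeps $\sigma$ infinitely differentiable, and the algorithmic, rational-data nature of the construction makes it computable. The strictly increasing part on $(-\infty,s)$ is used only to produce near-constant outputs, since $\sigma$ of a sufficiently large argument is as close to $1$ as we wish.

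Now, given $f$ and $\varepsilon$, write $f(\mathbf x)=\sum_{p=1}^{2d+1}g(\sum_q\lambda_q\phi_p(x_q))$ as above. By uniform continuity of $g$ on $[0,\Lambda]$ choose $\delta>0$ so that $(2d+1)$ times the oscillation of $g$ over any subinterval of length $\le\Lambda\delta$ is less than $\varepsilon/2$, and then pick for each $p$ an index $n_p$ with $|h_{n_p}(\tfrac{x-a}{s})-\phi_p(x)|<\delta$ on $[a,b]$. In the first layer, for each $p$ and $q$ take the threshold $\theta_{pq}$ so that, as $x_q$ runs over $[a,b]$, the number $\mathbf w^q\cdot\mathbf x-\theta_{pq}=x_q-\theta_{pq}$ runs over precisely the subinterval carrying the copy of $h_{n_p}$, and choose $c_{pq}$ to undo the amplitude $\lambda_{n_p}$, to insert the factor $\lambda_q$, and to apply a common rescaling making the range of $\sum_q c_{pq}\sigma(\mathbf w^q\cdot\mathbf x-\theta_{pq})$ comfortably shorter than $s$; that sum then equals $\sum_q\lambda_q\phi_p(x_q)$ up to an additive constant and an error $<\Lambda\delta$, and stays in a fixed compact interval. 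In the outer layer, pick $m$ so that $h_m$ approximates the appropriate affine rescaling of $g$ closely enough, set $e_1=\dots=e_{2d+1}$ equal to the amplitude-undoing factor $(\max h_m-\min h_m)/\lambda_m$ of its copy, and choose each $\zeta_p$ so that the quantity entering the $p$-th outer $\sigma$ falls on that copy; then the $p$-th outer neuron equals $g(\sum_q\lambda_q\phi_p(x_q))$ up to an additive constant and an error controlled by $\delta$ and by the quality of $h_m$. Finally, let the $(2d+2)$-th neuron have $\zeta_{2d+2}$ very negative so that it outputs a constant, and fix $e_{2d+2}$ to cancel all the additive constants accumulated in the other $2d+1$ terms. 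Adding everything up, the network differs from $f$ by less than $\varepsilon$ on $[a,b]^d$, with $e_1=\dots=e_{2d+1}$ and $e_{2d+2}$ the single exception.

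The heart of the argument, and the main obstacle, is the construction of $\sigma$ in the second step: it must be computable, $C^\infty$, sigmoidal, strictly increasing on $(-\infty,s)$, only $\lambda$-strictly increasing on $[s,+\infty)$, and yet contain an exact affine copy of every $h_n$. The conflict between reproducing the generally oscillating polynomials $h_n$ exactly and staying within $\lambda$ of a monotone function is resolved by shrinking every copy to amplitude at most $\lambda$ (indeed tending to $0$) and compensating afterwards through the free coefficients $c_{pq}$ and $e_p$; infinite differentiability is preserved by gluing smooth pieces along smooth transitions while keeping the running trend strictly increasing; computability is inherited from the explicit, rational-data construction. A secondary technical point is the exact matching of interval lengths: the first-layer copies must have length $s=b-a$ because a first-layer neuron carries the fixed weight $\mathbf w^q$, a unit coordinate vector, hence dilation $1$ — which is why $\sigma$ is built from the data $s$ and $\lambda$, and which is also the source of the additional remark about the possibility of fixing certain dilation coefficients.
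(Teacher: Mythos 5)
Your proposal is correct and follows essentially the same route as the paper: the Kolmogorov superposition theorem in the Lorentz--Sprecher form with a single outer function, an activation $\sigma$ built by packing amplitude-shrunk affine copies of a countable dense family of rational polynomials into successive length-$s$ intervals of $[s,+\infty)$ while keeping $\sigma$ within $\lambda$ of an increasing function tending to $1$, and a $(2d+2)$-th neuron with zero inner weights absorbing the accumulated additive constants so that $e_1=\dots=e_{2d+1}$. The only (immaterial) difference is bookkeeping: the paper enumerates \emph{monic} rational polynomials (via the Calkin--Wilf sequence and continued fractions) and recovers a general rational polynomial by dividing out its leading coefficient, whereas you take an arbitrary dense rational family directly.
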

\begin{proof}
We start with the algorithmic construction of $\sigma$ mentioned in the theorem. The algorithm consists of the following steps.

1. Consider the function
\begin{equation*}
  h(x) := 1 - \frac{\min\{1/2, \lambda\}}{1 + \log(x - s + 1)}.
\end{equation*}
Obviously, this function is strictly increasing on the real line and satisfies the following properties:
\begin{enumerate}
  \item $0 < h(x) < 1$ for all $x \in [s, +\infty)$;
  \item $1 - h(s) \le \lambda$;
  \item $h(x) \to 1$ as $x \to +\infty$.
\end{enumerate}
Our purpose is to construct $\sigma$ satisfying the two-sided inequality
\begin{equation} \label{eq:h_sigma_1}
  h(x) < \sigma(x) < 1
\end{equation}
for $x \in [s, +\infty)$. Then our $\sigma$ will approach $1$ as $x$ approaches $+\infty$ and obey the inequality
\begin{equation*}
  |\sigma(x) - h(x)| \le \lambda,
\end{equation*}
that is, it will be a $\lambda$-increasing function.

2. In this step, we enumerate the monic polynomials with rational coefficients. Let $q_n$ be the Calkin--Wilf sequence (see~\cite{CW00}). We can enumerate all the rational numbers by setting
$$r_0 := 0, \quad r_{2n} := q_n, \quad r_{2n-1} := -q_n, \ n = 1, 2, \dots.$$
Note that each monic polynomial with rational coefficients can uniquely be written as $r_{k_0} + r_{k_1} x + \ldots + r_{k_{l-1}} x^{l-1} + x^l$, and each positive rational number determines a unique finite continued fraction
$$
  [m_0; m_1, \ldots, m_l] := m_0 + \dfrac1{m_1 + \dfrac1{m_2 + \dfrac1{\ddots + \dfrac1{m_l}}}}
$$
with $m_0 \ge 0$, $m_1, \ldots, m_{l-1} \ge 1$ and $m_l \ge 2$. We now construct a one-to-one mapping between the set of all monic polynomials with rational coefficients and the set of all positive rational numbers as follows. To the only zeroth-degree monic polynomial 1 we associate the rational number 1, to each first-degree monic polynomial of the form $r_{k_0} + x$ we associate the rational number $k_0 + 2$, to each second-degree monic polynomial of the form $r_{k_0} + r_{k_1} x + x^2$ we associate the rational number $[k_0; k_1 + 2] = k_0 + 1 / (k_1 + 2)$, and to each monic polynomial
$$
  r_{k_0} + r_{k_1} x + \ldots + r_{k_{l-2}} x^{l-2} + r_{k_{l-1}} x^{l-1} + x^l
$$
of degree $l \ge 3$ we associate the rational number $[k_0; k_1 + 1, \ldots, k_{l-2} + 1, k_{l-1} + 2]$. In other words, we define $u_1(x) := 1$,
$$
  u_n(x) := r_{q_n-2} + x
$$
if $q_n \in \mathbb{Z}$,
$$
  u_n(x) := r_{m_0} + r_{m_1-2} x + x^2
$$
if $q_n = [m_0; m_1]$, and
$$
  u_n(x) := r_{m_0} + r_{m_1-1} x + \ldots + r_{m_{l-2}-1} x^{l-2} + r_{m_{l-1}-2} x^{l-1} + x^l
$$
if $q_n = [m_0; m_1, \ldots, m_{l-2}, m_{l-1}]$ with $l \ge 3$. Hence the first few elements of this sequence are defined as
$$
  1, \quad x^2, \quad x, \quad x^2 - x, \quad x^2 - 1, \quad x^3, \quad x - 1, \quad x^2 + x, \quad \ldots.
$$
The sequence of monic polynomials will be used in the sequel. 

3. First we construct $\sigma$ on the intervals $[(2n-1)s, 2ns]$, $n = 1, 2, \ldots$. For each monic polynomial $u_n(x) = \rho_0 + \rho_1 x + \ldots + \rho_{l-1} x^{l-1} + x^l$ with rational coefficients, set
\begin{equation*}
  B_1 := \rho_0 + \frac{\rho_1-|\rho_1|}{2} + \ldots + \frac{\rho_{l-1} - |\rho_{l-1}|}{2}
\end{equation*}
and
\begin{equation*}
  B_2 := \rho_0 + \frac{\rho_1+|\rho_1|}{2} + \ldots + \frac{\rho_{l-1} + |\rho_{l-1}|}{2} + 1.
\end{equation*}
Note that the numbers $B_1$ and $B_2$ depend on $n$, but for simplicity we will omit this in the notation.

Consider the sequence
\begin{equation*}
  M_n := h((2n+1)s), \qquad n = 1, 2, \ldots.
\end{equation*}
Obviously, this sequence is strictly increasing and converges to $1$. 

Now we define $\sigma$ as the function
\begin{equation} \label{eq:sigma_again}
\sigma(x) := a_n + b_n u_n \left( \frac{x}{s} - 2n + 1 \right), \quad x \in [(2n-1)s, 2ns].
\end{equation}
Here
\begin{equation} \label{eq:a_1, b_1}
  a_1 := \frac{1}{2}, \qquad b_1 := \frac{h(3s)}{2},
\end{equation}
and
\begin{equation} \label{eq:a_n, b_n}
  a_n := \frac{(1 + 2M_n) B_2 - (2 + M_n) B_1}{3(B_2 - B_1)}, \qquad b_n := \frac{1 - M_n}{3(B_2 - B_1)}, \qquad n = 2, 3, \ldots.
\end{equation}

It is not difficult to see that for $n>2$ the numbers $a_n$, $b_n$ are the coefficients of the linear function $y = a_n + b_n x$ mapping the closed interval $[B_1, B_2]$ onto the closed interval $[(1+2M_n)/3, (2+M_n)/3]$. In addition, for $n=1$, i.e. on the interval $[s,2s]$,
\begin{equation*}
  \sigma(x) = \frac{1+M_1}{2}.
\end{equation*}
Thus, we obtain that
\begin{equation} \label{eq:h_M_sigma_1}
  h(x) < M_n < \frac{1+2M_n}{3} \le \sigma(x) \le \frac{2+M_n}{3} < 1,
\end{equation}
for all $x \in [(2n-1)s, 2ns]$, $n = 1$, $2$, $\ldots$.

4. In this step, we construct $\sigma$ on the intervals $[2ns, (2n+1)s]$, $n = 1, 2, \ldots$. To this end we use the \emph{smooth transition function}
\begin{equation*}
  \beta_{a,b}(x) := \frac{\widehat{\beta}(b-x)}{\widehat{\beta}(b-x) + \widehat{\beta}(x-a)},
\end{equation*}
where
\begin{equation*}
  \widehat{\beta}(x) := \begin{cases} e^{-1/x}, & x > 0, \\ 0, & x \le 0. \end{cases}
\end{equation*}
Clearly, $\beta_{a,b}(x) = 1$ for $x \le a$, $\beta_{a,b}(x) = 0$ for $x \ge b$, and $0 < \beta_{a,b}(x) < 1$ for $a < x < b$. 

Consider the sequence
\begin{equation*}
  K_n := \frac{\sigma(2ns) + \sigma((2n+1)s)}{2}, \qquad n = 1, 2, \ldots.
\end{equation*}
Recall that the numbers $\sigma(2ns)$ and $\sigma((2n+1)s)$ have already been defined in the previous step. Since both the numbers $\sigma(2ns)$ and $\sigma((2n+1)s)$ belong to the interval $(M_n, 1)$, it follows that $K_n \in (M_n, 1)$.

First we extend $\sigma$ smoothly to the interval $[2ns, 2ns + s/2]$. Take the number $\varepsilon := (1 - M_n)/6$ and select $\delta \le s/2$ such that
\begin{equation} \label{eq:epsilon}
  \left| a_n + b_n u_n \left( \frac{x}{s} - 2n + 1 \right) - \left( a_n + b_n u_n(1) \right) \right| \le \varepsilon, \quad x \in [2ns, 2ns + \delta].
\end{equation}
One can select this $\delta$ as
\begin{equation*}
  \delta := \min\left\{ \frac{\varepsilon s}{b_n C}, \frac{s}{2} \right\},
\end{equation*}
where $C > 0$ is any number satisfying $|u'_n(x)| \le C$ for $x \in
(1, 1.5)$. For example, if $n=1$, then $\delta$ can be selected as $s/2$. Now define $\sigma$ on the left-hand half of the interval $[2ns, (2n+1)s]$ as the function
\begin{equation} \label{eq:sigma_left}
\begin{split}
  \sigma(x) & := K_n - \beta_{2ns, 2ns + \delta}(x) \\
  & \times \left(K_n - a_n - b_n u_n \left( \frac{x}{s} - 2n + 1 \right)\right), \quad x \in \left[ 2ns, 2ns + \frac{s}{2} \right].
\end{split}
\end{equation}

Let us prove that $\sigma(x)$ satisfies the condition~(\ref{eq:h_sigma_1}). Indeed, if $2ns + \delta \le x \le 2ns + s/2$, then there is nothing to prove, since $\sigma(x) = K_n \in (M_n, 1)$. If $2ns \le x < 2ns + \delta$, then $0 < \beta_{2ns, 2ns+\delta}(x) \le 1$ and hence from~(\ref{eq:sigma_left}) we obtain that for each $x \in [2ns, 2ns + \delta)$, $\sigma(x)$ is between the numbers $K_n$ and $A_n(x) := a_n + b_n u_n \left( \frac{x}{s} - 2n + 1 \right)$. On the other hand, from~(\ref{eq:epsilon}) it follows that
\begin{equation*}
  a_n + b_n u_n(1) - \varepsilon \le A_n(x) \le a_n + b_n u_n(1) + \varepsilon.
\end{equation*}
The last inequality together with~(\ref{eq:sigma_again}) and the inequalities~(\ref{eq:h_M_sigma_1}) yields that $A_n(x) \in \left[ \frac{1+2M_n}{3} - \varepsilon, \frac{2+M_n}{3} + \varepsilon \right]$ for $x \in [2ns, 2ns + \delta)$. Since $\varepsilon = (1 - M_n)/6$, the inclusion $A_n(x) \in (M_n, 1)$ is valid. Now since both $K_n$ and $A_n(x)$ lie in the interval $(M_n, 1)$, we conclude that
\begin{equation*}
  h(x) < M_n < \sigma(x) < 1, \quad \text{for } x \in \left[ 2ns, 2ns + \frac{s}{2} \right].
\end{equation*}

We define $\sigma$ on the right-hand half of the interval in a similar way:
\begin{equation*}
\begin{split}
  \sigma(x) & := K_n - (1 - \beta_{(2n+1)s - \overline{\delta}, (2n+1)s}(x)) \\
  & \times \left(K_n - a_{n+1} - b_{n+1} u_{n+1} \left( \frac{x}{s} - 2n - 1 \right)\right), \quad x \in \left[ 2ns + \frac{s}{2}, (2n+1)s \right],
\end{split}
\end{equation*}
where
\begin{equation*}
  \overline{\delta} := \min\left\{ \frac{\overline{\varepsilon}s}{b_{n+1} \overline{C}}, \frac{s}{2} \right\}, \qquad \overline{\varepsilon} := \frac{1 - M_{n+1}}{6}, \qquad \overline{C} \ge \sup_{[-0.5, 0]} |u'_{n+1}(x)|.
\end{equation*}
It is not difficult to verify, as above, that the constructed $\sigma(x)$ satisfies the condition~(\ref{eq:h_sigma_1}) on $[2ns + s/2, 2ns + s]$ and
\begin{equation*}
  \sigma \left( 2ns + \frac{s}{2} \right) = K_n, \qquad \sigma^{(i)} \left( 2ns + \frac{s}{2} \right) = 0, \quad i = 1, 2, \ldots.
\end{equation*}

Steps 3 and 4 together construct $\sigma$ on the interval $[s, +\infty)$.

5. On the remaining interval $(-\infty, s)$, we define $\sigma$ as
\begin{equation*}
  \sigma(x) := \left( 1 - \widehat{\beta}(s-x) \right) \frac{1 + M_1}{2}, \quad x \in (-\infty, s).
\end{equation*}
Clearly, $\sigma$ is a strictly increasing, smooth function on $(-\infty, s)$. In addition, $\sigma(x) \to \sigma(s) = (1 + M_1) / 2$, as $x$ tends to $s$ from the left and $\sigma^{(i)}(s) = 0$ for $i = 1$, $2$, $\ldots$. This final step completes the construction of $\sigma$ on the whole real line. Note that the constructed $\sigma$ is sigmoidal, infinitely differentiable on $\mathbb{R}$, strictly increasing on $(-\infty, s)$ and $\lambda$-strictly increasing on $[s, +\infty)$.

It should be noted that the above algorithm allows one to compute $\sigma$ at any point of the real axis instantly. The code of this algorithm is available at \url{https://sites.google.com/site/njguliyev/papers/tlfn}. As a practical example, we give here the graph of $\sigma$ (see Figure~\ref{fig:sigma50}) and a numerical table (see Table~\ref{tbl:sigma}) containing several computed values of this function on the interval $[0, 50]$. All computations were done in SageMath~\cite{Sage}. Figure~\ref{fig:sigma100_lambda} shows how the graph of the $\lambda$-increasing function $\sigma$ changes on the interval $[0, 100]$ as the parameter $\lambda$ decreases. Figure~\ref{fig:sigma100_s} displays variations in the graph of $\sigma$ with respect to the length $s$ of a closed interval $[a, b]$.

\begin{figure}
  \includegraphics[width=1.0\textwidth]{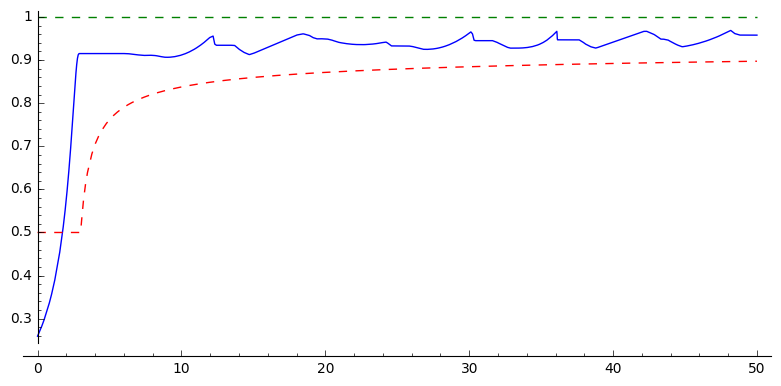}
  \caption{The graph of $\sigma$ on $[0, 50]$ ($s = 3$, $\lambda = 1/2$)}
  \label{fig:sigma50}
\end{figure}

\begin{table}
  \caption{Some computed values of $\sigma$ ($s = 3$, $\lambda = 1/2$)}
  \label{tbl:sigma}
  \begin{tabular}{|c|c|c|c|c|c|c|c|c|c|} \hline
    $t$ & $\sigma$ & $t$ & $\sigma$ & $t$ & $\sigma$ & $t$ & $\sigma$ & $t$ & $\sigma$ \\ \hline
    $0$ & $0.25941$ & $10$ & $0.91169$ & $20$ & $0.94932$ & $30$ & $0.96241$ & $40$ & $0.94166$ \\ \hline
    $1$ & $0.36008$ & $11$ & $0.92728$ & $21$ & $0.94074$ & $31$ & $0.94506$ & $41$ & $0.95333$ \\ \hline
    $2$ & $0.57848$ & $12$ & $0.95325$ & $22$ & $0.93635$ & $32$ & $0.94003$ & $42$ & $0.96499$ \\ \hline
    $3$ & $0.91514$ & $13$ & $0.93437$ & $23$ & $0.93635$ & $33$ & $0.92771$ & $43$ & $0.95602$ \\ \hline
    $4$ & $0.91514$ & $14$ & $0.92551$ & $24$ & $0.94074$ & $34$ & $0.92905$ & $44$ & $0.94295$ \\ \hline
    $5$ & $0.91514$ & $15$ & $0.91549$ & $25$ & $0.93278$ & $35$ & $0.93842$ & $45$ & $0.93186$ \\ \hline
    $6$ & $0.91514$ & $16$ & $0.92958$ & $26$ & $0.93177$ & $36$ & $0.96385$ & $46$ & $0.93943$ \\ \hline
    $7$ & $0.91198$ & $17$ & $0.94366$ & $27$ & $0.92482$ & $37$ & $0.94692$ & $47$ & $0.95079$ \\ \hline
    $8$ & $0.91105$ & $18$ & $0.95775$ & $28$ & $0.92900$ & $38$ & $0.93923$ & $48$ & $0.96593$ \\ \hline
    $9$ & $0.90650$ & $19$ & $0.95532$ & $29$ & $0.94153$ & $39$ & $0.92999$ & $49$ & $0.95800$ \\ \hline
  \end{tabular}
\end{table}

\begin{figure}
  \includegraphics[width=1.0\textwidth]{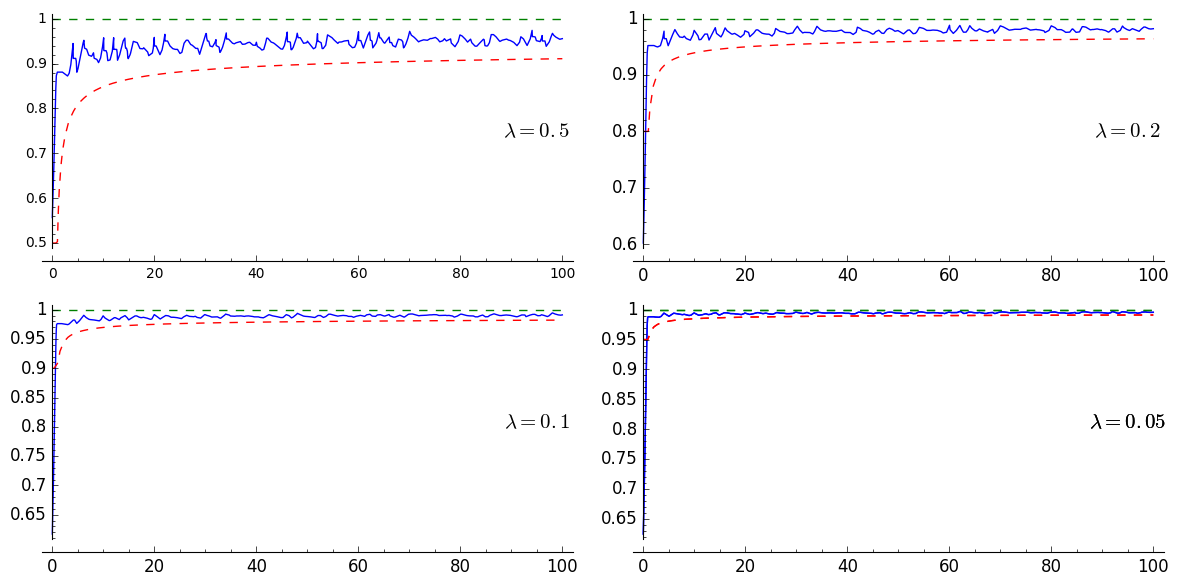}
  \caption{Changes in the graph of $\sigma$ with respect to $\lambda$ ($s = 1$)}
  \label{fig:sigma100_lambda}
\end{figure}

\begin{figure}
  \includegraphics[width=1.0\textwidth]{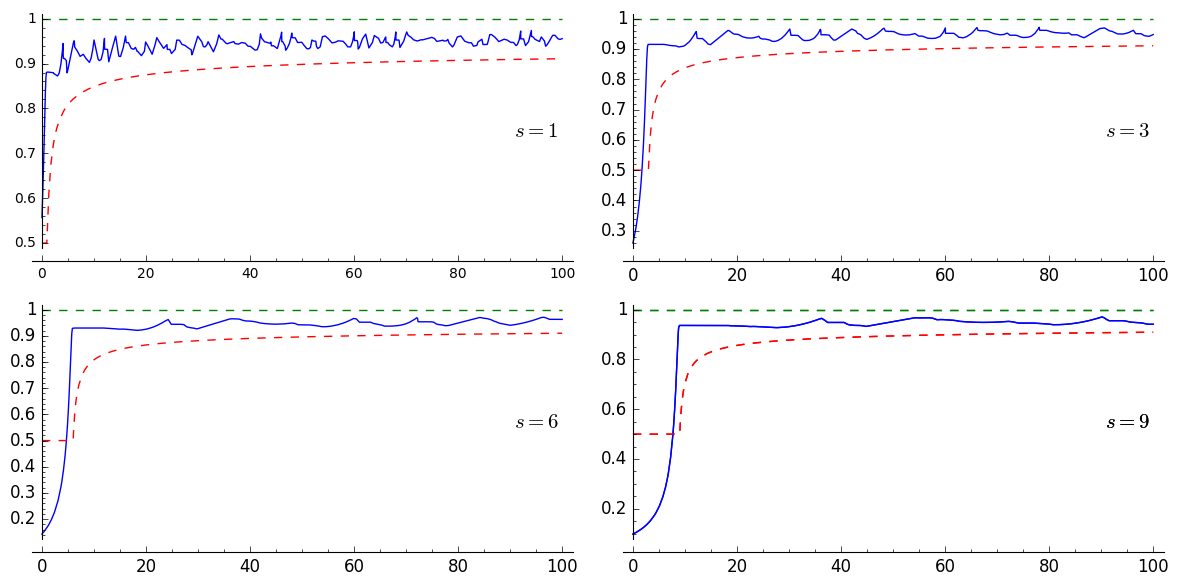}
  \caption{Changes in the graph of $\sigma$ with respect to $s$ ($\lambda = 0.75$)}
  \label{fig:sigma100_s}
\end{figure}

Now we show that in addition to its nice properties such as computability, smoothness and weak monotonicity, our $\sigma$ enjoys an important property of approximating each continuous $d$-variable function as an activation function for TLFNs with a fixed number of hidden neurons.

It follows from~(\ref{eq:sigma_again}) that
\begin{equation} \label{eq:sigma_d}
  \sigma(s x + (2n-1) s) = a_n + b_n u_n(x), \qquad x \in [0, 1]
\end{equation}
for $n = 1$, $2$, $\ldots$. Here $a_n$ and $b_n$ are computed by~(\ref{eq:a_1, b_1}) and~(\ref{eq:a_n, b_n}) for $n = 1$ and $n > 1$ respectively. From~(\ref{eq:sigma_d}) we obtain that each monic polynomial $u_n$, $n = 1$, $2$, $\ldots$, can be represented in the form
\begin{equation} \label{eq:u}
  u_n(x) = \frac{1}{b_n} \sigma(s x + (2n - 1) s) - \frac{a_n}{b_n}.
\end{equation}

Let now $f$ be any continuous function on the box $[a, b]^d$. By the Kolmogorov superposition theorem \cite{K57} in the form given by Lorentz \cite{L65} and Sprecher \cite{S65}, there exist constants $\lambda_q > 0$, $q = 1$, $\ldots$, $d$ with $\sum_{q=1}^{d} \lambda_q = 1$ and nondecreasing continuous functions $\phi_p \colon [a, b] \to [0, 1]$, $p = 1$, $\ldots$, $2d + 1$ such that every continuous function $f \colon [a, b]^d \to \mathbb{R}$ admits the representation
\begin{equation} \label{eq:Kolmogorov}
  f(x_1, \ldots, x_d) = \sum_{p=1}^{2d+1} g \left( \sum_{q=1}^d \lambda_q \phi_p(x_q) \right)
\end{equation}
for some $g \in C[0, 1]$ depending on $f$.

By the density of polynomials with the rational coefficients in the space of continuous functions over any compact subset of $\mathbb{R}$, for the exterior continuous univariate function $g$ in (\ref{eq:Kolmogorov}) and any $\varepsilon > 0$ there exists a polynomial $p(x)$ of the mentioned form such that
\begin{equation*}
  |g(x) - p(x)| < \frac{\varepsilon}{2(2d+1)}
\end{equation*}
for all $x \in [0, 1]$. Denote by $p_0$ the leading coefficient of $p$. If $p_0\neq 0$ (i.e., $p\not\equiv 0$) then we define $u_{n}$ as $u_{n}(x):=p(x)/p_0$, otherwise we just set $u_{n}(x):=1$. In both cases
\begin{equation*}
  |g(x) - p_0 u_n(x)| < \frac{\varepsilon}{2(2d+1)}, \qquad x \in [0, 1].
\end{equation*}
This together with~(\ref{eq:u}) means that
\begin{equation} \label{eq:g}
  |g(x) - (\alpha_0 \sigma(s x - \beta_0) - \gamma_0)| < \frac{\varepsilon}{2(2d+1)}
\end{equation}
for some $\alpha_0$, $\beta_0$, $\gamma_0 \in \mathbb{R}$ and all $x \in [0,1]$. Namely,
\begin{equation} \label{eq:coef}
  \alpha_0 = \frac{p_0}{b_n}, \qquad \beta_0 = s - 2ns, \qquad \gamma_0 = \frac{p_0 a_n}{b_n}.
\end{equation}
Substituting (\ref{eq:g}) in (\ref{eq:Kolmogorov}) we obtain that
\begin{equation} \label{eq:f}
  \left\vert f(x_1, \ldots, x_d) - \sum_{p=1}^{2d+1} \left( \alpha_0 \sigma \left( s \sum_{q=1}^d \lambda_q \phi_p(x_q) - \beta_0 \right) - \gamma_0 \right) \right\vert < \frac{\varepsilon}{2}
\end{equation}
for all $(x_1, \ldots, x_d) \in [0,1]^d$.

For each $p=1$, $\ldots$, $2d+1$, the function $\phi_p$ in (\ref{eq:Kolmogorov}) is defined on $[a,b]$. For this function, using the linear transformation $x = (t-a)/s$ from $[a,b]$ to $[0,1]$ and the same procedure for the function $g$ above, we can obtain the inequality
\begin{equation} \label{eq:fip}
  \left\vert \phi_p(t) - (\alpha_p \sigma(t - \beta_p) - \gamma_p) \right\vert < \delta,
\end{equation}
for all $t \in [a,b]$. Here $\delta$ is any positive real number, and the parameters $\alpha_p$, $\beta_p$ and $\gamma_p$ depend on $\delta$. Note that these parameters can be computed similarly as in (\ref{eq:coef}).

Since $\lambda_q > 0$ for $q = 1$, $\ldots$, $d$, and $\sum_{q=1}^d \lambda_q = 1$, it follows from (\ref{eq:fip}) that
\begin{equation} \label{eq:sumfip}
  \left\vert \sum_{q=1}^d \lambda_q \phi_p(x_q) - \left( \sum_{q=1}^d \lambda_q \alpha_p \sigma(x_q - \beta_p) - \gamma_p \right) \right\vert < \delta,
\end{equation}
for all $p = 1$, $\ldots$, $2d+1$, and $(x_1, \ldots, x_d) \in [0,1]^{d}$.

Now since the function $\alpha_0 \sigma(s x - \beta_0)$ is uniformly continuous on every closed interval of the real line, we can choose $\delta$ as small as necessary and obtain from (\ref{eq:sumfip}) that
\begin{multline*}
  \left\vert \sum_{p=1}^{2d+1} \alpha_0 \sigma \left( s \sum_{q=1}^d \lambda_q \phi_p(x_q) - \beta_0 \right) \right. \\
  \left. - \sum_{p=1}^{2d+1} \alpha_0 \sigma \left( s \left( \sum_{q=1}^d \lambda_q \alpha_p \sigma(x_q - \beta_p) - \gamma_p \right) - \beta_0 \right) \right\vert < \frac{\varepsilon}{2}.
\end{multline*}
This inequality may be rewritten in the form
\begin{equation} \label{eq:big}
  \left\vert \sum_{p=1}^{2d+1} \alpha_0 \sigma \left( s \sum_{q=1}^d \lambda_q \phi_p(x_q) - \beta_0 \right) - \sum_{p=1}^{2d+1} \alpha_0 \sigma \left( \sum_{q=1}^d c_{pq} \sigma (\mathbf{w}^{q} \cdot \mathbf{x} - \beta_p) - \zeta_p \right) \right\vert < \frac{\varepsilon}{2},
\end{equation}
where $c_{pq} = s \lambda_q \alpha_p$, $\zeta_p = s \gamma_p + \beta_0$, and $\mathbf{w}^{q}$ is the $q$-th coordinate vector. From (\ref{eq:f}) and (\ref{eq:big}) it follows that
\begin{equation} \label{eq:f2}
  \left\vert f(\mathbf{x}) - \left( \sum_{p=1}^{2d+1} \alpha_0 \sigma \left( \sum_{q=1}^d c_{pq} \sigma(\mathbf{w}^{q} \cdot \mathbf{x} - \beta_p) - \zeta_p \right) - (2d+1) \gamma_0 \right) \right\vert < \varepsilon,
\end{equation}
Clearly, the constant $(2d+1) \gamma_0$ can be written in the form
\begin{equation} \label{eq:const}
  (2d+1) \gamma_0 = \alpha \sigma \left( \sum_{q=1}^d c_q \sigma(\mathbf{w}^{q} \cdot \mathbf{x} - \theta_q) - \zeta \right),
\end{equation}
for $c_q = 0$, $q = 1$, $\ldots$, $d$, and suitable coefficients $\alpha$ and $\zeta$. Considering (\ref{eq:const}) in (\ref{eq:f2}) we finally obtain that
\begin{equation*}
  \left\vert f(\mathbf{x}) - \left( \sum_{p=1}^{2d+2} e_p \sigma \left( \sum_{q=1}^d c_{pq} \sigma(\mathbf{w}^{q} \cdot \mathbf{x} - \theta_{pq}) - \zeta_p \right) \right) \right\vert < \varepsilon,
\end{equation*}
where $e_1 = e_2 = \ldots = e_{2d+1}$. The last inequality completes the proof of the theorem.
\end{proof}

\begin{remark}
Obviously, a compact subset $Q$ of the space $\mathbb{R}^{d}$ can be embedded into a box $[-l,l]^{d}$, and by the Tietze extension theorem (see~\cite[Theorem 15.8]{W70}), any continuous function $f$ on $Q$ can be extended to $[-l,l]^{d}$. Hence Theorem~\ref{thm:main} is valid not only for boxes of the form $[a,b]^{d}$ but for any compact set $Q \subset \mathbb{R}^{d}$, with the proviso that $s = 2 l$.
\end{remark}

\begin{remark}
Theorem~\ref{thm:main}, in particular, shows that TLFNs are more powerful than SLFNs, since SLFNs with a fixed number of hidden neurons and/or weights have not the capability of approximating multivariate functions (see Introduction). We refer the reader to \cite{L17} for interesting results and discussions around the comparison of performances between MLFNs and SLFNs.
\end{remark}

\begin{remark}
In \cite{G03}, Gripenberg showed that the general approximation property of feedforward multilayer perceptron networks can be achieved in networks where the number of neurons in each layer is bounded, but the number of layers grows to infinity. This is the case provided the activation function is twice continuously differentiable and not linear. Taking an exceedingly large number of layers is an indispensable part of Gripenberg's method. Can one develop a different method which enables to use only a preliminarily prescribed number of layers for all approximated functions? To answer this question, we started with SLFNs. It turned out that in this case the answer is ``yes'' if approximated functions are univariate (see \cite{GI16}). Moreover, one can fix the weights of constructed SLFNs. But SLFNs with fixed weights or bounded number of neurons are proved not capable of approximating multivariate functions (see \cite{GI18}). Then how many hidden layers with bounded number of neurons are needed to approximate multivariate functions with arbitrary precision? First of all, one may want to know if any such constrained approximation is possible in practice. Theorem~\ref{thm:main} shows that even two hidden layers and a specifically constructed activation function are sufficient to solve this problem affirmatively.
\end{remark}

\end{document}